\def\cs3c{C$\text{S}^3$C}
\def\s3c{$\text{S}^3$C}
\def\ie{i.e.}
\def\eg{e.g.}
\def\st{\textrm{s.t.}}
\def\diag{\textrm{diag}}
\def\trace{\textrm{trace}}
\def\a{\boldsymbol{a}}
\def\q{\textbf{q}}
\def\0{\textbf{0}}
\def\1{\textbf{1}}
\def\x{\boldsymbol{x}}
\def\C{\mathcal{C}}
\def\E{\mathcal{E}}
\def\EE{\mathbb{E}}
\def\PP{\mathbb{P}}
\def\Q{\mathcal{Q}}
\newcommand{\RR}{I\!\!R} 
\newcommand{\myparagraph}[1]{\smallskip\noindent\textbf{#1.}}
\newtheorem{theorem}{Theorem}[section]
\newtheorem{definition}{Definition}[]
\begin{document}
%

\title{Constrained Structured Sparse Subspace Clustering with Side-Information: A Revisit}
\title{Constrained Structured Sparse Subspace Clustering with Background Knowledge: A Revisit}
\title{Constrained Subspace Clustering with Background Knowledge}
\title{Constrained Structured Subspace Clustering with Background Knowledge}
\title{Structured Subspace Clustering with Background Knowledge}
\title{Structured Subspace Clustering with Side-Information}
\title{Structured Subspace Clustering with Constraints}
\title{Constrained Structured Subspace Clustering with Background Knowledge}
\title{Constrained Structured Sparse Subspace Clustering with Background Knowledge}
\title{Constrained Sparse Subspace Clustering with Background Knowledge}
\title{Constrained Subspace Clustering with Background Knowledge}
\title{Constrained Structured Sparse Subspace Clustering with Side-Information}
\title{Constrained Sparse Subspace Clustering with Side-Information}



%


%

\author{\IEEEauthorblockN{Chun-Guang Li,
Junjian Zhang, and Jun Guo}
\IEEEauthorblockA{School of Information and Communication Engineering\\
Beijing University of Posts and Telecommunications,
Beijing 100876, P.R. China \\
Email: \{lichunguang; zhjj; guojun\}@bupt.edu.cn}
}



\maketitle

\begin{abstract}
Subspace clustering refers to the problem of segmenting high dimensional data drawn from a union of subspaces into the respective subspaces. In some applications, partial side-information to indicate ``must-link'' or ``cannot-link'' in clustering is available. This leads to the task of subspace clustering with side-information. However, in prior work the supervision value of the side-information for subspace clustering has not been fully exploited. To this end, in this paper, we present an enhanced approach for constrained subspace clustering with side-information, termed Constrained Sparse Subspace Clustering plus (CSSC+), in which the side-information is used not only in the stage of learning an affinity matrix but also in the stage of spectral clustering. Moreover, we propose to estimate clustering accuracy based on the partial side-information and theoretically justify the connection to the ground-truth clustering accuracy in terms of the Rand index. We conduct experiments on three cancer gene expression datasets to validate the effectiveness of our proposals.
\end{abstract}
%



%
\IEEEpeerreviewmaketitle

\section{Introduction}
\label{sec:intro}
High dimensional data in many applications can be considered as samples drawn from a union of multiple low-dimensional subspaces. Assigning data points into their own subspaces and then recovering the underlying low-dimensional structure of the data refer to a well-known problem -- \textit{subspace clustering}~\cite{Vidal:SPM11-SC}. It has found important applications in motion segmentation \cite{Rao:PAMI10}, genes expression profiles clustering \cite{McWilliams:DMKD14}, hybrid system identification \cite{Bako:Automatica11}, matrix completion \cite{Li:TSP16}, etc.

\subsection{Prior Work}
\label{sec:prior-work}

Over the past decade, a large number of algorithms have been developed, \eg, $K$-plane \cite{Bradley:JGO00}, Generalized Principal Component Analysis (GPCA) \cite{Vidal:PAMI05}, Spectral Curvature Clustering (SCC) \cite{Chen:IJCV09}, Sparse Subspace Clustering (SSC) \cite{Elhamifar:CVPR09, Elhamifar:TPAMI13, You:CVPR16-SSCOMP}, Low Rank Representation (LRR) \cite{Liu:ICML10, Liu:TPAMI13}, Least Square Regression (LSR) \cite{Lu:ECCV12,Peng:TCYB16}, Correlation Adaptive Subspace Segmentation (CASS) \cite{Lu:ICCV13-TraceLasso}, Latent SSC \cite{Patel:ICCV13}, 
Low-Rank Sparse Subspace Clustering (LRSSC) \cite{Wang:NIPS13-LRR+SSC}, Structured SSC \cite{Li:CVPR15, Li:TIP17, Zhang:VCIP16}, 
and Elastic-net Subspace Clustering (EnSC) \cite{You:CVPR16-EnSC}. 

Among the existing work, self-expressiveness \cite{Elhamifar:CVPR09} based algorithms, \eg, SSC, LRR, LSR, EnSC, gain the most attention. Roughly speaking, different algorithms differ in using different regularization in the self-expressiveness model. For example,  SSC~\cite{Elhamifar:CVPR09} makes use of the $\ell_1$ norm on the coefficients vector, LRR \cite{Liu:ICML10} adopts the nuclear norm on the coefficients matrix, LSR \cite{Lu:ECCV12} uses Frobenius norm on the coefficients matrix, LRSSC \cite{Wang:NIPS13-LRR+SSC} uses a mixture of the $\ell_1$ norm and the nuclear norm on the coefficients matrix, and EnSC \cite{You:CVPR16-EnSC} takes a mixture of the $\ell_1$ and the $\ell_2$ norm on the coefficients vector. On the other hand, 
different error models have also been used to yield robustness, \eg, the $\ell_2$ norm is used to account for the Gaussian noise in data, the $\ell_1$ norm used in SSC to account for the outlying entries in data, the $\ell_{2,1}$ norm used in LRR to account the column-wise corruptions in data, and the mixture of Gaussian model~\cite{Li:CVPR15MoG} and the correntropy \cite{He:TNNLS16} are used to model complicated corruptions. 


The subspace clustering methods mentioned above are purely unsupervised. In some applications, partial supervision information is available. For example, in the task of clustering genes in DNA microarray data \cite{Lockhart:Nature00, Schena:Science95}, there often exists prior knowledge about the relationships between some subset of genes or genes expression profiles \cite{Fang:JBI06, Chopra:BMCbioinfo08, Huang:Bioinfo06, Bair:WIRCS13}. Gene expression data of different cancer subtypes are usually lying on multiple clusters \cite{Alon:PNAS99} and each cluster can be well approximated by a low-dimensional subspace \cite{McWilliams:DMKD14, Cui:PLOS13-LRR}. If some pairs of genes expression profiles are known to have the same (or different) subtypes, then it would be helpful to use this knowledge in subspace clustering. Such prior knowledge essentially provides partial side-information to indicate ``must-link'' or ``cannot-link'' constraints in clustering, which leads to the task of subspace clustering with side-information.

The side-information is important because it provides partial supervision for clustering. Previous work has demonstrated that incorporating side-information in the self-expressiveness model could bring performance improvements. For sample, in \cite{Wang:ICIP14}, some ``must-links'' are encoded into a binary weights matrix to encourage nonzero self-expressive coefficients in the corresponding positions;
in \cite{Li:TIP17}, both ``must-link'' and ``cannot-link'' in side-information are encoded into a weights matrix to encourage or penalize the self-expressiveness coefficients in the corresponding positions.
%
%
While encoding side-information into a weights matrix used in self-expressiveness model could improve the induced affinity, the supervision value of the side-information has not been fully exploited because, encoding pairwise constraints as weights to improve the affinity does not imply that the final clustering must satisfy the constraints. Besides, there is still a reminding issue in prior work on how to select a proper model parameter (\eg, $\lambda$). 

\subsection{Our Contributions}
\label{sec:contribution}

In this paper, we present an enhanced approach for constrained subspace clustering with side-information, termed Constrained Sparse Subspace Clustering plus (CSSC+), 
in which the side-information is used not only in the stage of learning an affinity matrix but also in the stage of spectral clustering. %
In the stage of learning an affinity matrix, each data point is expressed as a linear combination of all other data points, in which the connections to those data points having ``cannot-link'' are inhibited 
and the connections to those data points having ``must-link'' are encouraged. In the stage of spectral clustering, the ``must-link'' constraints and the ``cannot-link'' constraints in side-information are both taken into account into the procedure of clustering. Moreover, we propose to estimate clustering accuracy based on the available side-information and theoretically justify the connection to the ground-truth clustering accuracy in terms of the Rand index. Experiments conducted on three cancer gene expression datasets demonstrate the effectiveness of our proposals.

\myparagraph{Paper Outline} The remainder of the paper is organized as follows. Section~\ref{sec:subspace-clustering-to-constrained-subspace-clustering} gives a review on self-expressiveness based subspace clustering. Section \ref{sec:constrained-sparse-subspace-clustering-complete} presents our proposal---CSSC+. 
Section~\ref{sec:performance-evaluation-estimation} describes clustering quality estimation based on partial side-information.  Section~\ref{sec:experiments} shows experiments and Section~\ref{sec:conclusion} concludes the paper.

\section{From Subspace Clustering to Constrained Subspace Clustering with Side-Information}
\label{sec:subspace-clustering-to-constrained-subspace-clustering}

This section will briefly review methods for subspace clustering and constrained subspace clustering with side-information.

\subsection{Subspace Clustering}
\label{sec:subspace-clustering}

State-of-the-art subspace clustering methods, \eg, SSC \cite{Elhamifar:TPAMI13}, LRR \cite{Liu:TPAMI13}, LSR \cite{Lu:ECCV12}, EnSC \cite{You:CVPR16-EnSC}, are usually based on self-expressiveness model. 
These approaches can be summarized into a unified optimization problem as follows:
\begin{align}
\begin{split}
\min\limits_{C,E} ~ \|C\|_{\C} + \lambda \|E\|_{\E} ~~
\st ~ X = X C + E, ~\text{diag}(C) = \0,
\end{split}
\label{eq:SC-unified-framework}
\end{align}
where $\|\cdot\|_{\C}$ and $\|\cdot\|_{\E}$ are two properly chosen norms, $\lambda > 0$ is a tradeoff parameter, and $\text{diag}(C) = \0$ is optionally used to rule out the trivial solution. Different 
approaches employ different regularization terms $\|C\|_{\C}$ and/or $\|E\|_{\E}$.

Once the optimal representation matrix $C$ is obtained, spectral clustering \cite{vonLuxburg:StatComp2007} can be applied on the induced affinity matrix $A$ where $A = \frac{1}{2}(|C| + |C^\top|)$. Let $Q = \begin{bmatrix}\textbf{q}_1,\cdots,\textbf{q}_n \end{bmatrix}$ be an $N\times n$ indicator matrix where ${q}_{ij}=1$ if the $i$-th column of $X$ lies in subspace $S_j$ and $q_{ij} = 0$ otherwise. Spectral clustering can be formulated as follows: 
\begin{align}
\begin{split}
\min\limits_{Q} ~& \trace \{ Q^\top L Q\} ~~\st~~ Q \in \Q,
\end{split}
\label{eq:Spectral-clustering}
\end{align}
where $L = D - A$, $D$ is a diagonal matrix with $D_{jj}=\sum_i A_{ij}$, and $\Q$ is the set of all valid segmentation matrices with $n$ groups.

\subsection{Structured Subspace Clustering}
\label{sec:structured-subspace-clustering}

Note that the objective function \eqref{eq:Spectral-clustering} of spectral clustering measures the cost of cutting the affinity graph into $n$ parts, and also measures the discrepancy between the coefficient matrix and the segmentation matrix, because
\begin{align}
\begin{split}
\trace \{ Q^\top L Q\} = \sum_{i,j} |C_{ij}| \frac{1}{2} \|\q^{(i)} -\q^{(j)} \|_2^2 = \|C\|_{Q},
\end{split}
\label{eq:C-Q-norm}
\end{align}
where $\| C \|_Q $ is called \textit{subspace structured norm} of representation matrix $C$ with respect to segmentation matrix $Q$ \cite{Li:CVPR15}.

By noticing of the connection between the representation matrix $C$ and the segmentation matrix $Q$, it is natural to integrate problem \eqref{eq:SC-unified-framework} and \eqref{eq:Spectral-clustering} into a joint optimization problem
\begin{align}
\begin{split}
\min\limits_{C,E,Q} ~&
\|C\|_{\C}+\alpha\|C\|_{Q} + \lambda \|E\|_{\E} \\
\st ~~ &X = X C + E, ~\text{diag}(C) = \0, ~~ Q \in \Q,
\end{split}
\label{eq:StrSC}
\end{align}
where $\alpha >0$ is a tradeoff parameter. This is called \textit{structured subspace clustering} \cite{Li:TIP17}. Then, if the $\ell_1$ norm is used for $\|C\|_\C$, problem \eqref{eq:StrSC} turns out to be:
\begin{align}
\begin{split}
\min\limits_{C,E,Q} ~&
\|C\|_{1}+\alpha\|C\|_{Q} + \lambda \|E\|_{\E} \\
\st ~~ &X = X C + E, ~\text{diag}(C) = \0, ~~ Q \in \Q,
\end{split}
\label{eq:S3C}
\end{align}
which is called Structured Sparse Subspace Clustering (\s3c)~\cite{Li:TIP17}.

The algorithms mentioned above are unsupervised. When some side-information is available, there is a desire to incorporate the partial supervision information to facilitate subspace clustering. 

\subsection{Constrained Subspace Clustering with Side-Information}
\label{sec:subspace-clustering}
Recently, an approach, called Constrained Structured Sparse Subspace Clustering (\cs3c) \cite{Li:TIP17} is proposed, in which the side-information is encoded into weights matrix $\Psi$ to modify the $\ell_1$ norm of $C$ (\ie, $\| C \odot \Psi \|_1$), where the operator $\odot$ is the element-wise product and the elements of weights matrix $\Psi$ are defined by:
\begin{align}
\begin{split}
\Psi_{ij} =
\begin{cases}
\text{e}^{-1}, ~~~ \text{if $i$ and $j$ have a ``must-link'',}
\\
\text{e}^{+1}, ~~~ \text{if $i$ and $j$ have a ``cannot-link'',}
\\
\text{e}^{0},~~~~~ \text{if there is no side-information.}
\end{cases}
\end{split}
\label{eq:weight-matrix-Psi}
\end{align}
%
Then, the available side-information is incorporated into the optimization problem \eqref{eq:S3C} as follows:
\begin{align}
\begin{split}
\min\limits_{C,E,Q} ~&
\|C \odot \Psi \|_1 + \alpha \|C\|_Q + \lambda \|E\|_{\E} \\
\st ~~ &X = X C + E, ~\text{diag}(C) = \0, ~~ Q \in \Q.
\end{split}
\label{eq:CS3C}
\end{align}

Encoding the side-information into a weighting matrix $\Psi$ is able to penalize or encourage the coefficients when having ``cannot-link'' or ``must-link'', which thus is helpful to yield an improved coefficients matrix. However, it is not guaranteed that the constraints in side-information could be automatically satisfied in clustering. Besides, it is in principle unclear how to determine the model parameters, \eg, $\lambda$ and $\alpha$ in \eqref{eq:CS3C}. If a set of improper tradeoff parameters are used, the clustering results might dramatically degenerate (see, \eg, Fig.\ref{fig:ERR-vs-EST-Novertis-p5} (b)).
To tackle these deficiencies, in this paper, we propose an enhanced approach for subspace clustering with side-information, in which the side-information is used not only to weight the self-expressiveness model, but also to conduct spectral clustering and parameters selection.

\section{Constrained Sparse Subspace Clustering with Side-Information} 
\label{sec:constrained-sparse-subspace-clustering-complete}

This section will present 
an enhance approach for subspace clustering with side-information.

\subsection{Constrained Sparse Self-Expressiveness Model}
\label{sec:subspace-structured-SR}

In SSC, each data point is expressed as a sparse linear combination of all other data points. To take into account the side-information, we impose the constraints into the self-expressiveness model, such that the connections to those data points having ``cannot-link'' are inhibited and the connections to those data points having ``must-link'' are encouraged.
To be specific, as in \cs3c \cite{Li:TIP17}, we solve for $C$ and $E$ by solving a weighted sparse representation problem as follows
\begin{align}
\label{eq:CSSC}
\!\!
\begin{split}
\min\limits_{C,E} ~& \|C \odot \Psi \|_1 + \lambda \|E\|_{\E}  \\
\st ~~ &X = X C + E, ~ \diag(C) = \0, \!\!
\end{split}
\end{align}
where $\Psi$ is a weights matrix which encodes the available side-information as in \eqref{eq:weight-matrix-Psi}.

We term problem \eqref{eq:CSSC} as Constrained Sparse Subspace Clustering (CSSC). This problem can be solved using the Alternating Direction Method of Multipliers (ADMM) \cite{Lin:09, Lin:NIPS11, Boyd:FTML10}. For the details of the derivation of algorithm to solve problem \eqref{eq:CSSC}, we refer the readers to \cite{Li:TIP17}. 

\subsection{Spectral Clustering with Constraints}
\label{sec:spectral-clustering-constraints}

Given coefficients matrix $C$, we define the data affinity matrix $A$ via $A = \frac{1}{2}(|C|+|C^\top|)$. When partial side-information is available, we impose the constraints into spectral clustering and thus solve spectral clustering with constraints as follows:
\begin{eqnarray}
\begin{array}{rl}
\min\limits_{Q} ~\trace(Q^\top L Q) ~~\text{ s.t.} ~~ Q \in \tilde \Q,
\end{array}
\label{eq:spectral-clustering-with-constraints}
\end{eqnarray}
where $L$ is the graph Laplacian of the data affinity matrix $A$ and $\tilde \Q \subseteq \Q$ is the set of all feasible segmentation matrices $Q$ which satisfy the pairwise constraints encoded in $\Psi$.\footnote{We assume that the constraints in the given side-information are correct and consistent. Thus, there exists a segmentation matrix $Q$ which satisfies all the constraints.}

To solve problem \eqref{eq:spectral-clustering-with-constraints}, we relax the constraint $Q \in \tilde \Q$ to the constraint $Q^\top D Q =I$ and perform spectral embedding at first, \ie, solving
\begin{align}
\min_{Q \in \RR^{N\times n}}  ~\trace(Q^\top L Q) \quad \st \quad Q^\top D Q = I,
\label{eq:SC-Lap-QtDQ}
\end{align}
to find $Q \in \RR^{N \times n}$.  
Then, we quantize $Q$ into the set of feasible segmentation matrices $\tilde \Q$ by applying a constrained $k$-means algorithm \cite{Wagstaff:ICML01}.

We call the two-step approach---first solving the coefficient matrix $C$ via problem \eqref{eq:CSSC} and then finding the segmentation matrix $Q$ via problem \eqref{eq:spectral-clustering-with-constraints} as Constrained Sparse Subspace Clustering plus (CSSC+).

\myparagraph{Remark 1} In \cs3c \cite{Li:TIP17}, instead of searching for a segmentation matrix $Q \in \Q$, we can also search for $Q \in \tilde \Q$, which thus turns problem~\eqref{eq:CS3C} into the following:
\begin{align}
\begin{split}
\min\limits_{C,E,Q} ~&
\| C \odot \Psi \|_1 + \alpha \|C\|_Q + \lambda \|E\|_{\E} \\
\st ~~ &X = X C + E, ~~\text{diag}(C) = \0, ~~ Q \in \tilde{\Q}.
\end{split}
\label{eq:CS3Cplus}
\end{align}
%
We call problem \eqref{eq:CS3Cplus} as \textit{Constrained Structured Sparse Subspace Clustering plus} (\cs3c+). It can be solved by solving subproblems \eqref{eq:CSSC} and \eqref{eq:spectral-clustering-with-constraints} alternatingly.

\section{Performance Evaluation and Estimation with Side-Information}
\label{sec:performance-evaluation-estimation}

This section will present a general method to perform parameter (or model) selection for subspace clustering with side-information.

\subsection{Clustering Error with Respect to Groundtruth Label}
\label{sec:clustering-accuracy}

When the groundtruth label of each data point is available, the quality of clustering can be evaluated by clustering error (ERR),  which is defined as
%
\begin{equation}
ERR(\a, \hat \a) = 1 - \max \limits_\pi \frac{1}{N}\sum_{i=1}^N 1_{\{\pi(\a_i) = \hat \a_i\}},
\end{equation}
where $\a, \hat \a \in \{1,\cdots, n\}^N$ are the original and estimated assignments of the columns in $X$ to $n$ subspaces, and the maximum is taken with respect to all permutations
\begin{align}
\pi:\{1,\cdots, n\}^N\rightarrow \{1,\cdots, n\}^N.
\end{align}

While ERR is a valid measure to compare the partitions of the data with respect to the groundtruth labels, it is not a valid measure to evaluate the partitions of the data with respect to the pairwise side-information (or partial pairwise side-information).

To evaluate the accuracy of clustering with respect to pairwise side-information, we introduce a measure to compare two data partitions, 
which is called Rand index \cite{Rand:JASA71}.

\subsection{Rand Index based on Complete Pairwise Side-Information} 
\label{sec:Rand-index}

Denote $\Theta$ as a subspace structure matrix of the obtained clustering where $\Theta_{ij}=0$ if data points $i$ and $j$ belong to the same cluster and $\Theta_{ij}=1$ otherwise.

\begin{definition}
\label{def:RI}
The Rand index, denoted as $\mu$, is defined as
\begin{align}
\mu =1-\frac{1}{N^2-N} \| \Theta - \Theta^\ast \|_1,
\label{eq:Rand-Index}
\end{align}
where $\Theta$ and $\Theta^\ast$ are the subspace structure matrices of the currently returned clustering and the ground-truth clustering, respectively.
\end{definition}
When the pairwise ground-truth information $\Theta^\ast$ is available, the Rand index is easy to compute. We have that $0 \le \mu \le 1$. If the currently returned clustering is perfect, then $\Theta = \Theta^\ast$ and thus $\mu =1$; otherwise $\mu < 1$.

Since that subspace clustering is an unsupervised task, complete ground-truth knowledge (\eg, $\Theta_\ast$) of the data is unknown. 
Without complete ground-truth knowledge, there is no means to provide a criterion which directly links to the clustering accuracy with theoretical guarantee. Nevertheless, in the setting of clustering with side-information, the side-information is able to provide partial observations of the pairwise ground-truth knowledge. By using the partial observations, there is a hope to define a clustering accuracy estimator, which directly links to the Rand index with theoretical justification.

\subsection{Rand Index Estimator based on Partial Side-Information}
\label{sec:Rand-index-estimator}

\begin{definition}
The Rand Index Estimator (RIE), denoted by $\hat \mu$, is defined as
\begin{align}
\hat \mu = 1 - \frac{1}{ | \Omega |} \sum_{(i,j) \in \Omega} |\Theta_{i,j} - \Phi^\ast_{i,j}|,
\label{eq:Rand-Index-EST}
\end{align}
where $\Theta$ is the subspace structure matrix of the currently returned clustering, $\Omega$ is the index set of the given pairwise constraints in $\Phi^\ast$ in which $\Phi^\ast_{i,j}=0$ if the paired data points $(i,j)$ have a ``must-link'' and $\Phi^\ast_{i,j}=1$ if they have a ``cannot-link''.
\label{def:Rand-Index-EST}
\end{definition}

It is clear that $0 \le \hat \mu \le 1$, where $\hat \mu =1$ if the clustering result indicated by $\Theta$ is feasible with respect to the constraints in $\Phi^\ast$, and $\hat \mu < 1$ otherwise.

\begin{theorem}
Assume that the given constraints in set $\Omega$ are sampled independently at random with probability $p$ from a population of $N(N-1)$ constraints. 
Then, we have that
\begin{align}
| \hat \mu - \mu |  < \frac{2}{pN(N-1)- 1},
\label{eq:Rand-EST-bound}
\end{align}
holds with probability at least $1- 4 e^{-2N(N-1)}$,
where $\mu$ is the Rand index defined in \eqref{eq:Rand-Index} and $\hat \mu$ is the Rand index estimator defined in \eqref{eq:Rand-Index-EST}. 
\label{theorem:mu-hat-mu-bound}
\end{theorem}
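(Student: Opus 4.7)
My plan is to view $\hat\mu$ as a Monte-Carlo estimator of $\mu$ computed from the Bernoulli$(p)$ subsample $\Omega$, and to bound $|\hat\mu-\mu|$ by controlling the sample size $|\Omega|$ and the observed number of disagreements separately via Hoeffding's inequality. Set $n=N(N-1)$, $Y_{ij}:=|\Theta_{ij}-\Theta^{\ast}_{ij}|\in\{0,1\}$, and let $X_{ij}$ be independent Bernoulli$(p)$ indicators of $(i,j)\in\Omega$. Then $K:=|\Omega|=\sum_{i\neq j}X_{ij}$ and $\hat S:=\sum_{i\neq j}X_{ij}Y_{ij}$ are each sums of $n$ independent $[0,1]$-valued variables, with $\mathbb{E}[K]=pn$ and $\mathbb{E}[\hat S]=pS$, where $S:=\sum_{i\neq j}Y_{ij}$. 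Under the standing assumption that the given constraints are correct and consistent, $\Phi^{\ast}$ agrees with $\Theta^{\ast}$ on $\Omega$, so that $\hat\mu=1-\hat S/K$ and $\mu=1-S/n$ are in exactly the form of Definitions \ref{def:RI} and \ref{def:Rand-Index-EST}.

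Applying Hoeffding's inequality to each sum yields $\Pr[|K-pn|\geq t]\leq 2e^{-2t^2/n}$ and $\Pr[|\hat S-pS|\geq t]\leq 2e^{-2t^2/n}$, and a union bound makes both deviations hold simultaneously with probability at least $1-4e^{-2t^2/n}$. On this favorable event, writing $K=pn+\delta_K$ and $\hat S=pS+\delta_S$, the identity
\begin{equation*}
\hat\mu-\mu=\frac{S}{n}-\frac{\hat S}{K}=\frac{SK-n\hat S}{nK}=\frac{S\delta_K-n\delta_S}{nK},
\end{equation*}
together with $0\le S\le n$ and $K\geq pn-t$, gives
\begin{equation*}
|\hat\mu-\mu|\;\leq\;\frac{S|\delta_K|+n|\delta_S|}{nK}\;\leq\;\frac{|\delta_K|+|\delta_S|}{K}\;\leq\;\frac{2t}{pn-t}.
\end{equation*}
Taking $t=1$ recovers the numerator $2$ and denominator $pn-1$ claimed in \eqref{eq:Rand-EST-bound}.

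The main obstacle I expect is calibrating the probability exponent so that it matches the stated $1-4e^{-2N(N-1)}$: with $t=1$ the Hoeffding exponent is only $-2/n$, which is much weaker. Closing this gap will likely require recasting the two Hoeffding bounds in the normalized form $\Pr[|K/n-p|\geq\epsilon]\leq 2e^{-2n\epsilon^2}$ (so that the exponent scales with $n$ rather than $1/n$) and then propagating the normalized deviation through the ratio identity, or alternatively sharpening via a Bernstein-type bound that exploits $\mathrm{Var}(K)=np(1-p)\leq n/4$ to avoid the loose step $S\le n$. The algebraic manipulation of the ratio is routine; the delicate point is extracting simultaneously the sharp constant $2$, the sharp denominator $pn-1$, and an exponential confidence with exponent of order $N(N-1)$.
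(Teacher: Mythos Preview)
Your approach is essentially identical to the paper's: the paper also introduces Bernoulli indicators $Z_{i,j}$, sets $Y_{i,j}=Z_{i,j}\Delta_{i,j}$, applies Hoeffding separately to $\sum Y_{i,j}$ and $\sum Z_{i,j}$, union-bounds, sandwiches the ratio $\hat\rho=\sum Y/\sum Z$ between $(pM\rho-\epsilon)/(pM+\epsilon)$ and $(pM\rho+\epsilon)/(pM-\epsilon)$, and then takes $\epsilon=1$ to get $|\hat\rho-\rho|\le 2/(pM-1)$. Your algebraic route via $(S\delta_K-n\delta_S)/(nK)$ is a cosmetic variant of the same sandwich.

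The ``obstacle'' you flag is real, and it is not something you are missing --- it is an error in the paper's proof. The paper writes the Hoeffding tail as $2e^{-2\epsilon^2 M}$, with $M=N(N-1)$ multiplying in the exponent, and that is how the stated confidence $1-4e^{-2N(N-1)}$ arises. But for a sum of $M$ independent $[0,1]$-valued variables the correct Hoeffding bound on an absolute deviation of $\epsilon$ is $2e^{-2\epsilon^2/M}$, exactly as you wrote. Your suggestion to ``recast in normalized form'' $\Pr[|K/n-p|\ge\epsilon]\le 2e^{-2n\epsilon^2}$ is the same inequality after the substitution $t=n\epsilon$ and does not recover the paper's exponent; nor will Bernstein. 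So your derivation of the deviation bound $2/(pn-1)$ is correct and matches the paper, but the accompanying probability $1-4e^{-2N(N-1)}$ in the theorem statement is not attainable by this argument (the honest bound at $t=1$ is only $1-4e^{-2/(N(N-1))}$).
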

\begin{proof}

Let $\{Z_{i,j}\}_{1\le i \le N, 1 \le j \le N, {i \neq j}}$ be $M$ independent identically distributed Bernoulli random variables, where $\PP(Z_{i,j} =1) =p$, $\Omega =\{(i,j): Z_{i,j} =1\}$, and $M = N(N-1)$. 
Denote $Y_{i,j} = Z_{i,j} \Delta_{i,j}$ where $\Delta_{i,j}=|\Theta_{i,j} - \Theta^\ast_{i,j}| \in \{0, 1\}$, then $Y_{i,j} \in \{0, 1\}$ is also a random variable. 

Let $\rho := \frac{1}{M}\sum_{i,j} \Delta_{i,j}$ and $\hat \rho := \frac{1}{\sum_{i,j} Z_{i,j}} \sum_{i,j} Z_{i,j} \Delta_{i,j}$, then we have $\rho = 1- \mu$, $\hat \rho = 1 - \hat \mu$. Moreover, we have
\begin{align}
\EE [\sum_{i,j} Y_{i,j}] =& \EE [\sum_{i,j} Z_{i,j} \Delta_{i,j}], \\
                         =& \sum_{i,j} \EE [Z_{i,j}] \Delta_{i,j}, \\
                         =& p \sum_{i,j} \Delta_{i,j},  \\
                         =& p M \rho,
\label{eq:Rand-EST-Expect-1}
\end{align}
and
\begin{align}
\EE [\sum_{i,j} Z_{i,j}] = \sum_{i,j} \EE [Z_{i,j}] =p M,
\label{eq:Rand-EST-Expect-2}
\end{align}
where $\EE[\cdot]$ is the expectation of a random variable.

By applying Hoeffding's inequality \cite{Hoeffding:JASA63} to $M$ independent random variables $\{Y_{i,j}\}_{i\neq j}$ and $\{Z_{i,j}\}_{i\neq j}$, seperately, we have
\begin{align}
\label{eq:Rand-EST-Hoeffding-Y}
\PP( | \sum_{i,j} Y_{i,j} - p M \rho | > \epsilon ) <  2e^{-2 \epsilon^2 M},
\end{align}
and
\begin{align}
\PP( | \sum_{i,j} Z_{i,j} - p M | > \epsilon ) < 2e^{-2 \epsilon^2 M}.
\label{eq:Rand-EST-Hoeffding-Z}
\end{align}
%
Then, with probability at least $1 -4 e^{-2 \epsilon^2 M}$, we have
\begin{align}
\begin{split}
\label{eq:Rand-EST-Hoeffding-3}
p M \rho - \epsilon &\le \sum_{i,j} Y_{i,j} \le p M \rho + \epsilon,
\end{split}
\end{align}
and
\begin{align}
\begin{split}
pM -\epsilon &\le \sum_{i,j} Z_{i,j} \le p M + \epsilon.
\end{split}
\label{eq:Rand-EST-m-Hoeffding-2}
\end{align}
By combining \eqref{eq:Rand-EST-Hoeffding-3} and \eqref{eq:Rand-EST-m-Hoeffding-2}, we bound $\hat \rho$ as follows:
\begin{align}
\begin{split}
\frac{p M \rho - \epsilon}{p M + \epsilon} \le \hat \rho =\frac{\sum_{i,j} Y_{i,j}}{\sum_{i,j} Z_{i,j}} \le \frac{p M \rho + \epsilon}{pM -\epsilon},
\end{split}
\label{eq:Rand-EST-Hoeffding}
\end{align}
\ie,
\begin{align}
\begin{split}
\!\!\!\frac{-2\epsilon}{p M - \epsilon} \le \frac{-\epsilon(1+\rho)}{p M + \epsilon} \le \hat \rho - \rho  \le \frac{\epsilon(1+\rho)}{pM -\epsilon}\le \frac{2 \epsilon}{pM -\epsilon}.
\end{split}
\label{eq:Rand-EST-Hoeffding2}
\end{align}
So, we have
\begin{align}
\begin{split}
|\hat \rho - \rho |  \le \frac{2 \epsilon}{pM -\epsilon},
\end{split}
\label{eq:Rand-EST-Hoeffding3}
\end{align}
holds with probability at least $1 -4e^{-2 \epsilon^2 M}$. 

Note that $M=N(N-1), \hat \rho - \rho = \mu - \hat \mu$, and by taking $\epsilon=1$, then we have that:
\begin{align}
\begin{split}
| \hat \mu - \mu | < \frac{2}{pN(N-1) - 1},
\end{split}
\label{eq:Rand-EST-Hoeffding4}
\end{align}
holds with probability at least $1 -4e^{-2 N(N-1)}$. This completes the proof.


\end{proof}

\begin{figure}[tbh]
\centering
\subfigure[RIE]{\includegraphics[clip=true,trim=5 0 22 5,width=0.49\columnwidth]{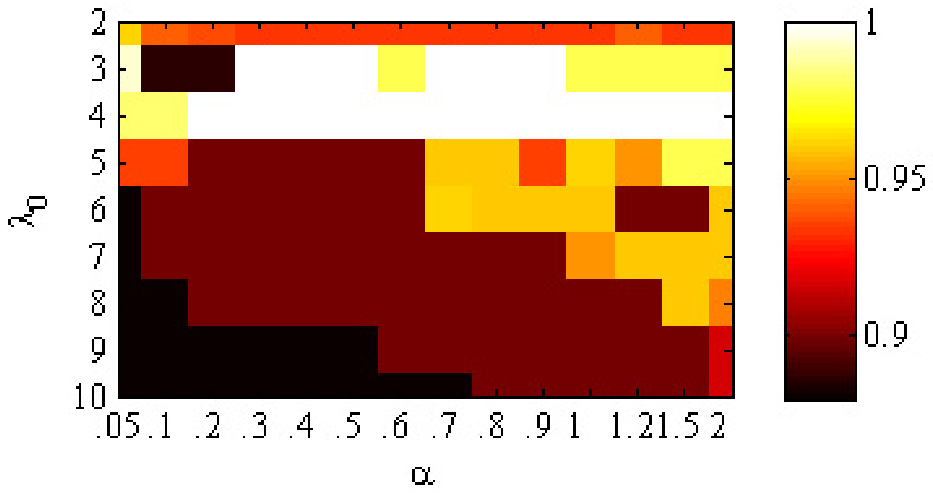}}
\subfigure[ERR]{\includegraphics[clip=true,trim=5 0 22 5,width=0.49\columnwidth]{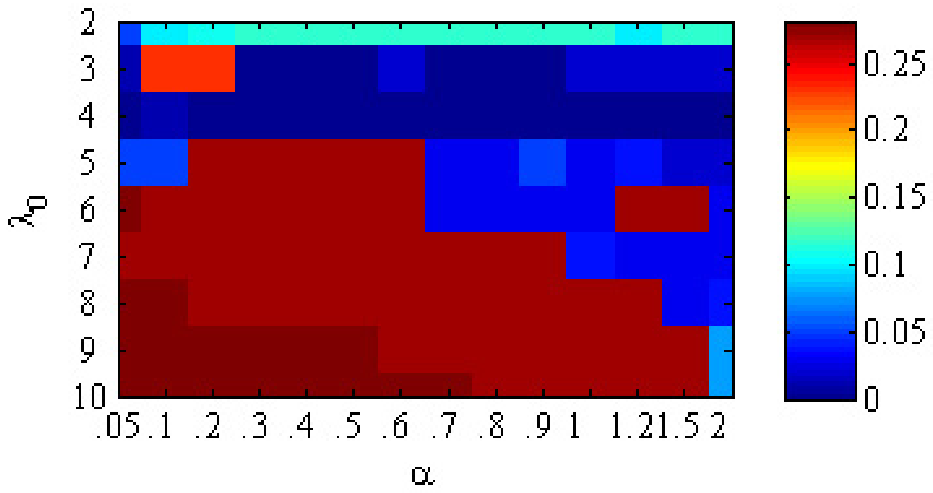}}
\caption{The Rand index estimator (RIE) and the clustering error (ERR) for \cs3c on dataset Novartis BPLC with $5\%$ side-information. (This figure is best viewed in color.)}
\label{fig:ERR-vs-EST-Novertis-p5}
\end{figure}

\myparagraph{Remark 2} If the available side-information is sampled at random and sufficient, the Rand index estimator could provide a good estimation for the Rand index, which connects to the true clustering accuracy. Nevertheless, in case of that the side-information is neither sufficient nor sampled at random, the Rand index estimator might fail to give an acceptable estimation for the Rand index.

\subsection{Parameter Selection via the Rand Index Estimator} 
\label{sec:parameter-selection-via-rand-est}

When the side-information are sufficient and sampled at random, we use the Rand index estimator $\hat \mu$ to estimate the Rand index.
\begin{table}[hbt]
\centering
\caption{Summary Information of Datasets. $D$ is the ambient dimension, $N$ is the number of data points, and $n$ is the number of groups.}
\label{tab:dataset}
\begin{tabular}{c | c  c  c  c | c c }
\hline
Data sets &St. Jude leukemia &Lung Cancer  &Novartis BPLC   \\
\hline
$D$       &    985           &1000         &1000       \\ 
$N$       &    248           &197          &103      \\ 
$n$       &    6             &4            &4        \\ 
\hline
\end{tabular}
\end{table}
\begin{table*}[thb]
\caption{Average clustering error (ERR \%) with standard derivation (std) on three cancer data sets under different proportions of side-information. The best results are in bold font.} 
\centering
\begin{tabular}{c|c c c | c c c | c c c }
\hline
Data sets        & \multicolumn{3}{c|}{St. Jude leukemia}        & \multicolumn{3}{c|}{Lung Cancer}     & \multicolumn{3}{c}{Novartis BPLC}      \\
Side-info.       & 0\%       & 1\%        &5\%       &0 \%         & 1\%        &5\%       & 0\%    & 1\%        &5\%    \\
\hline
\hline
PSC(1) \cite{McWilliams:DMKD14} &3.10  &    -  &  - &7.80   &  -   &-    &4.60  &  -   &  -   \\
\hline
LRR~\cite{Liu:ICML10}              &14.11 &    -     &     -         &5.08     & -             &  -          &14.60  &  -  &  -           \\
\!\!\!\!\!\!\!\!\!LRR+             &14.11 &18.51$\pm$4.02&13.27$\pm$3.87 &5.08     &6.73$\pm$3.02  &1.55$\pm$1.88&14.60  &9.13$\pm$3.01&  4.76$\pm$1.75             \\

\hline

LSR1 \cite{Lu:ECCV12}              &9.27   &      &    -        &4.57& -             &  -          &6.80 &  -      & -       \\
\!\!\!\!\!\!\!\!\!LSR1+              &9.27  &15.52$\pm$1.86 &8.65$\pm$2.97&4.57&10.23$\pm$2.64 &3.50$\pm$3.01&6.80 &7.28$\pm$1.20 &4.56$\pm$2.50     \\

LSR2 \cite{Lu:ECCV12}               &9.68  &  -            &    -        &4.57& -             &  -            &7.77 &  -          & -        \\
\!\!\!\!\!\!\!\!\!LSR2+              &9.68  &15.08$\pm$1.80 &9.82$\pm$1.95&4.57&11.60$\pm$3.88 &4.77$\pm$3.03  &7.77 &7.18$\pm$1.20& 3.59$\pm$1.94     \\

\hline
\hline
\!\!SSC \cite{Elhamifar:CVPR09}           &3.23    &    -         &  -            &5.08 & -           &  -         &\textbf{2.91}&  -          &  -      \\

\!\!\!\!\!\!\!\!SSC+           &3.23    &2.80$\pm$0.92 &1.03$\pm$0.69  &5.08 &4.80$\pm$1.41&1.45$\pm$2.02 &\textbf{2.91}&\textbf{1.55$\pm$1.11}&  \textbf{0.24$\pm$0.53}               \\

\hline 

\!\!\!\!\!\!\!\!CSSC            &3.23    &2.38$\pm$0.60 &1.43$\pm$0.50  &5.08 &4.59$\pm$2.64&4.44$\pm$2.80 &\textbf{2.91}&2.82$\pm$0.43   &2.52$\pm$1.11\\

\!\!\!\!\!CSSC+           &3.23    &\textbf{2.08$\pm$0.95} &\underline{0.56$\pm$0.50}  &5.08 &\textbf{3.02$\pm$0.99}&\underline{1.02$\pm$1.22} &\textbf{2.91}&\underline{1.60$\pm$0.96}   &{0.44$\pm$0.80}\\

\hline

\text{\cs3c~\cite{Li:TIP17}}   &\textbf{2.42}    &2.32$\pm$0.54&1.33$\pm$0.49  &\textbf{4.06}&3.98$\pm$0.30 & 3.81$\pm$0.48 &\textbf{2.91} &2.77$\pm$0.48 &2.33$\pm$1.02 \\ 

\hline

\!\!\!\!\!\!\!\!\!\text{\cs3c+}   & \textbf{2.42}      &\textbf{2.08$\pm$0.98}  &\textbf{0.48$\pm$0.43}&\textbf{4.06}&\underline{3.60$\pm$1.37} &\textbf{0.84$\pm$0.98}  &\textbf{2.91} &\underline{1.60$\pm$0.96} &\underline{0.34$\pm$0.65}     \\

\hline
\end{tabular}
\label{table:CS3Cplus-Leukemia-LungA-Novartis}
\end{table*}
As an interesting application, we employ the Rand index estimator $\hat \mu$ to select the proper parameters\footnote{It is also possible to use the Rand index estimator to determine the number of subspaces when it is unknown.}, \eg, $\alpha$ and $\lambda$ in \eqref{eq:CS3C}, $\lambda$ in \eqref{eq:CSSC}. More specifically, we conduct experiments with the parameters varying in a range, record the Rand index estimator $\hat \mu$ as a function of the parameters, and then pick up the parameters which associate to the peak value of the Rand index estimator.

An example of parameters selection with the Rand index estimator is shown in Fig.~\ref{fig:ERR-vs-EST-Novertis-p5}, where panel (a) shows the Rand index estimator of \cs3c as a function of the parameters $\lambda_0$ and $\alpha$, and panel (b) shows the corresponding clustering error. The coordinates of the brightest region in panel (a) indicate the potentially proper parameters. As verified by experiments, the corresponding dark blue region in panel (b) did yield the lowest clustering error.

\section{Experiments}
\label{sec:experiments}
This section will evaluate the effectiveness of our proposals for subspace clustering with side-information.

We consider three publicly available benchmark cancer datasets\footnote{http://www.broadinstitute.org/cgi-bin/cancer/datasets.cgi}: St. Jude leukemia \cite{Yeoh:CC02-abbr}, Lung Cancer \cite{Bhattacharjee:PNAS01-abbr}, and Novartis BPLC \cite{Su:PNAS02-abbr}. For clarity, we list the summary information of the three datasets in Table \ref{tab:dataset}.
To prepare the side-information, following the protocol used in \cite{Li:TIP17}, we sample uniformly at random a proportion $p$ of entries from the ground-truth subspace structure matrix $\Theta^\ast$, where $p=1\%$ and $5\%$.

\subsection{Performance Evaluation on Subspace Clustering with Side-Information}
\label{sec:experiments-clustering-with-constraints}

To evaluate the performance of using side-information, we choose three popular spectral clustering based methods: SSC \cite{Elhamifar:TPAMI13}, LRR \cite{Liu:ICML10, Cui:PLOS13-LRR}, LSR \cite{Lu:ECCV12}, and a PCA based subspace clustering method, Predictive Subspace Clustering (PSC) \cite{McWilliams:DMKD14}. Moreover, we conduct experiments to compare the following approaches: 
\cs3c \cite{Li:TIP17}, SSC+, LRR+, LSR+, and CS$^3$C+, where the appendix ``+'' means using the $k$-means with constraints in spectral clustering. Note that if the percentage of given side-information is $0\%$, then \cs3c and CSSC reduce to \s3c and SSC, respectively.

\begin{table}[htb]
\caption{Parameter $\lambda$ or $\lambda_0$ used in each method on each dataset.}
\centering
\begin{tabular}{c|c c c c }
\hline
Methods       & LRR ($\lambda$) & LSR1 ($\lambda$) & LSR2 ($\lambda$) & SSC ($\lambda_0$)   \\
\hline
St. Jude leukemia       &1.4       &0.15       &0.18      & 4  \\
Lung Cancer          &0.5       &5       &5      & 10 \\
Novartis BPLC  &1       &1       &1      & 5 \\
\hline
\end{tabular}
\label{table:parameters}
\end{table}

The average clustering error (ERR) with standard deviation is recorded over 20 trials. Experimental results are presented in Table \ref{table:CS3Cplus-Leukemia-LungA-Novartis}. 
The results of PSC are directly cited from \cite{McWilliams:DMKD14}. The parameter $\lambda$ used in each baseline method is listed  in Table \ref{table:parameters}, where the parameter 
$\lambda$ used in the family of SSC, including \cs3c, CSSC, CSSC+, and \cs3c+, 
is kept the same as in SSC by default.\footnote{For \cs3c, once an exceptionally worse clustering result occurs, we use the Rand index estimator to tune the parameters $\alpha$ and $\lambda$. The parameter $\alpha$ in \cs3c and \cs3c+ is kept the same.}
We observe that:
\begin{itemize}

\item When the side-information is relatively sufficient, \eg, $p=5\%$, the clustering errors of all methods with side-information  
    are significantly reduced, compared to the counterpart method without side-information. 
    This hints the importance of incorporating the constraints to clustering.

\item When the side-information is relatively not sufficient, \eg, $p=1\%$, the clustering errors of SSC+, CSSC+, and CS$^3$C+ are still notably reduced compared to SSC, CSSC, and CS$^3$C; however, the clustering errors of LRR+, LSR1+, and LSR2+ are exceptionally increased, respectively, in most cases. This suggests that the effect of imposing constraints in clustering depends on the quality of the affinity matrix.

\item Compared to SSC, both CSSC and CSSC+ reduce the clustering errors notably. Comparing to CSSC, CSSC+ reduces the clustering error more significantly. It is similar for \cs3c and \cs3c+. This confirms the superiority of incorporating the constraints into not only the self-expressiveness model but also the process of clustering.


\end{itemize}

\begin{figure*}[ht]
\centering
\vspace{-0mm}
\subfigure[CSSC ($1\%$)]{\includegraphics[clip=true,trim=0 0 5 0,width=0.45\columnwidth]{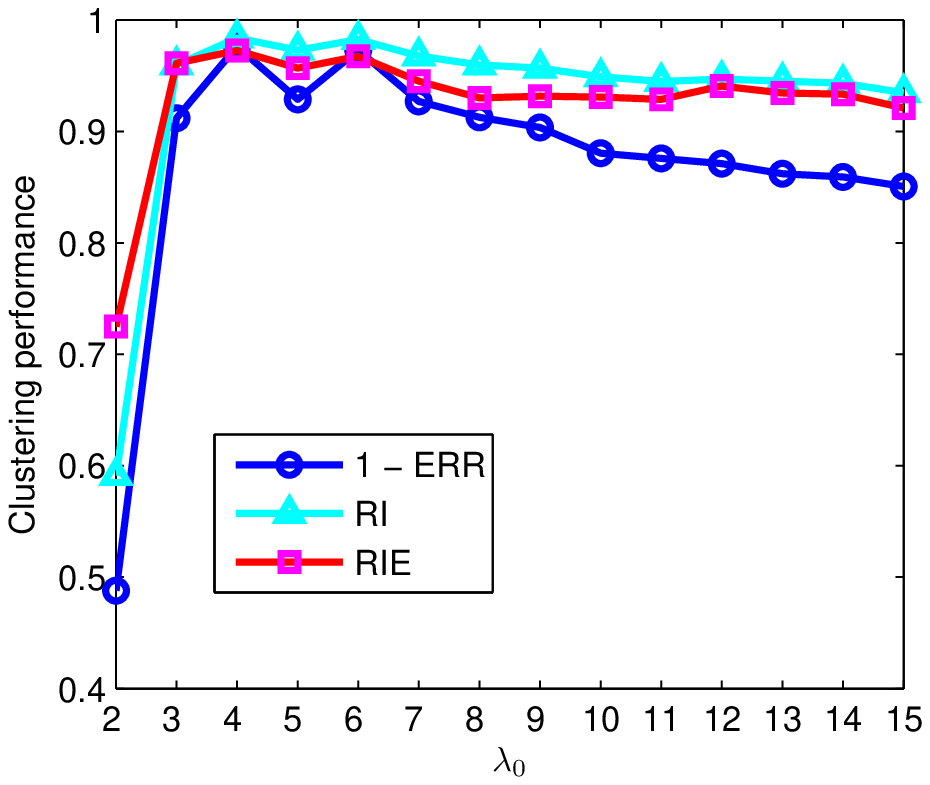}}
\subfigure[CSSC+ ($1\%$)]{\includegraphics[clip=true,trim=0 0 5 0,width=0.45\columnwidth]{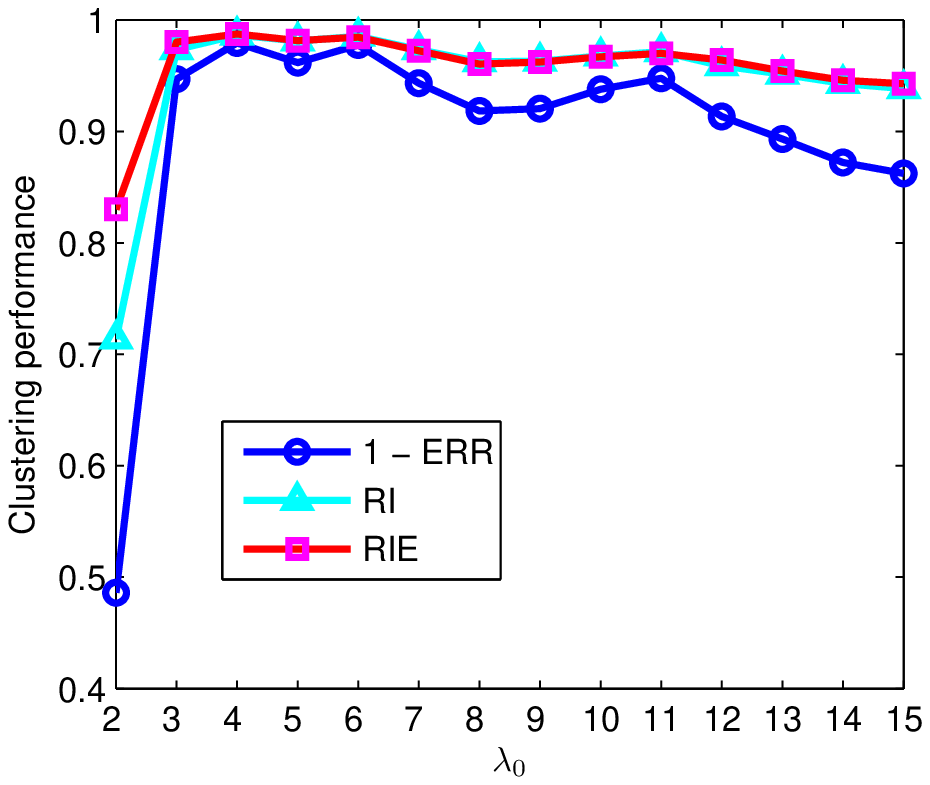}}
\subfigure[CSSC ($5\%$)]{\includegraphics[clip=true,trim=0 0 5 0,width=0.45\columnwidth]{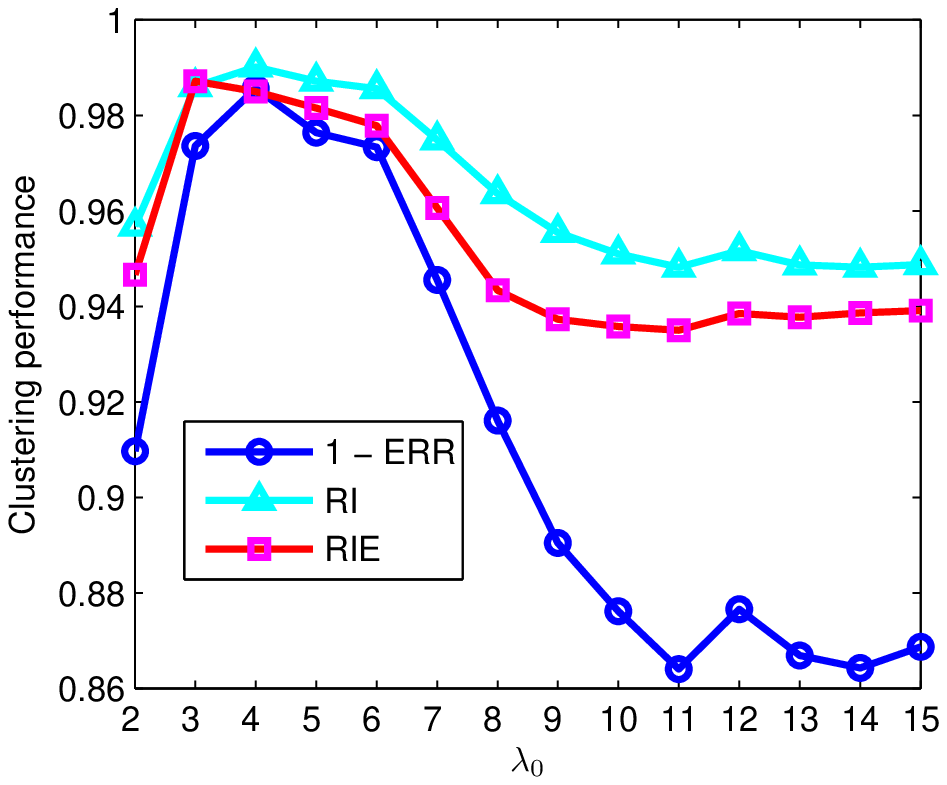}}
\subfigure[CSSC+ ($5\%$)]{\includegraphics[clip=true,trim=0 0 5 0,width=0.45\columnwidth]{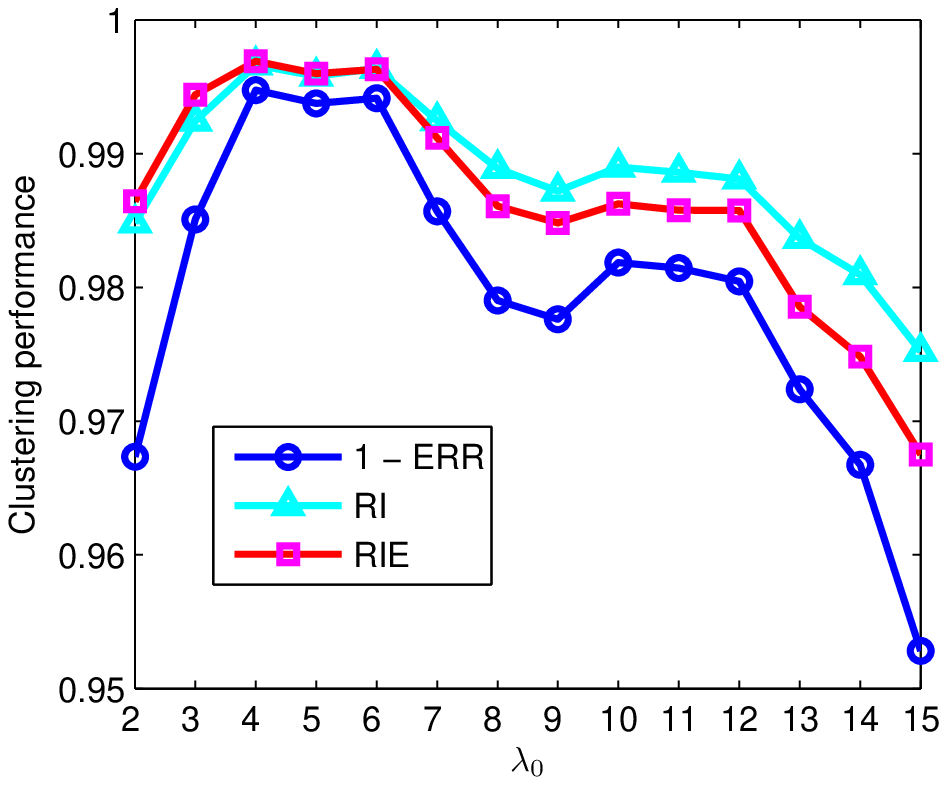}}\\

\subfigure[CSSC ($1\%$)]{\includegraphics[clip=true,trim=0 0 5 0,width=0.45\columnwidth]{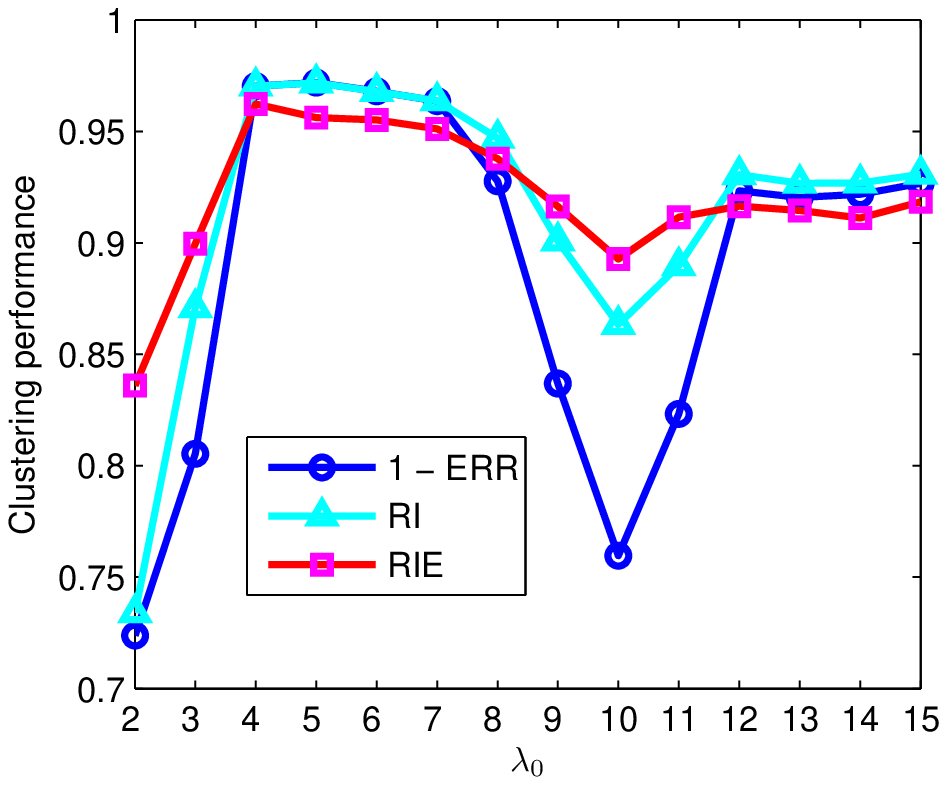}}
\subfigure[CSSC+ ($1\%$)]{\includegraphics[clip=true,trim=0 0 5 0,width=0.45\columnwidth]{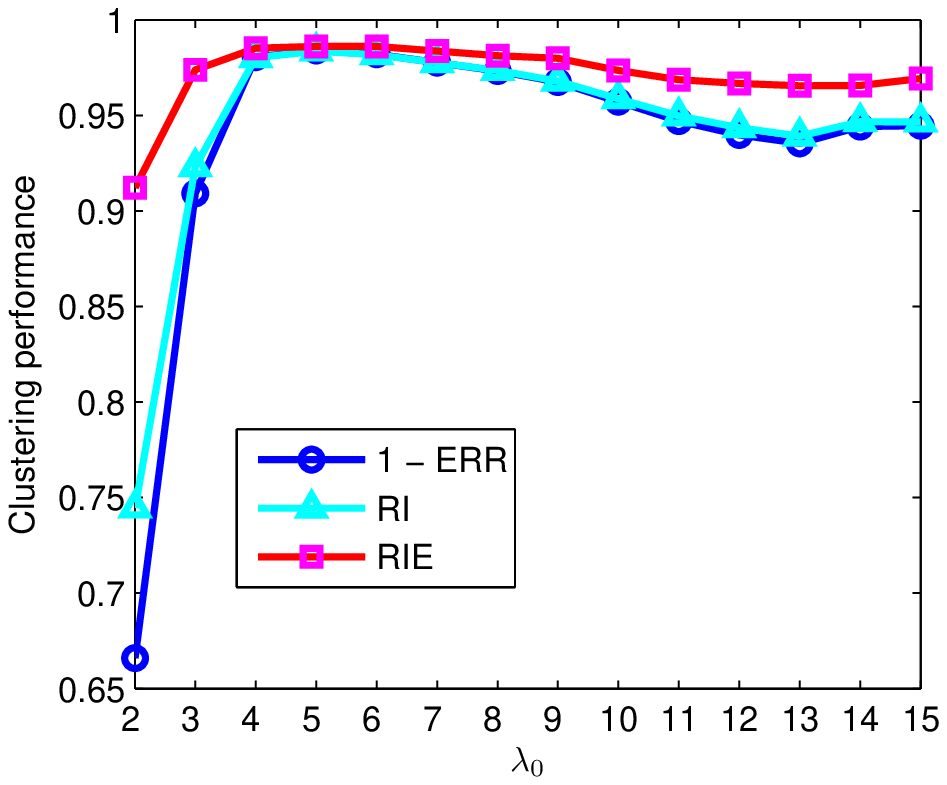}}
\subfigure[CSSC ($5\%$)]{\includegraphics[clip=true,trim=0 0 5 0,width=0.45\columnwidth]{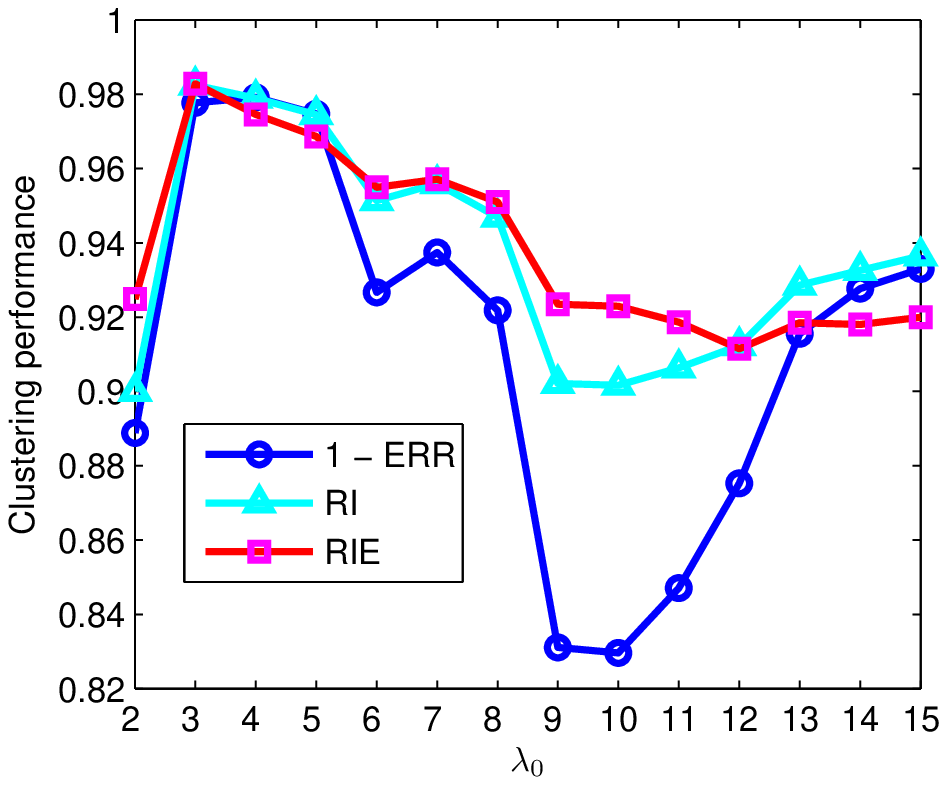}}
\subfigure[CSSC+ ($5\%$)]{\includegraphics[clip=true,trim=0 0 5 0,width=0.45\columnwidth]{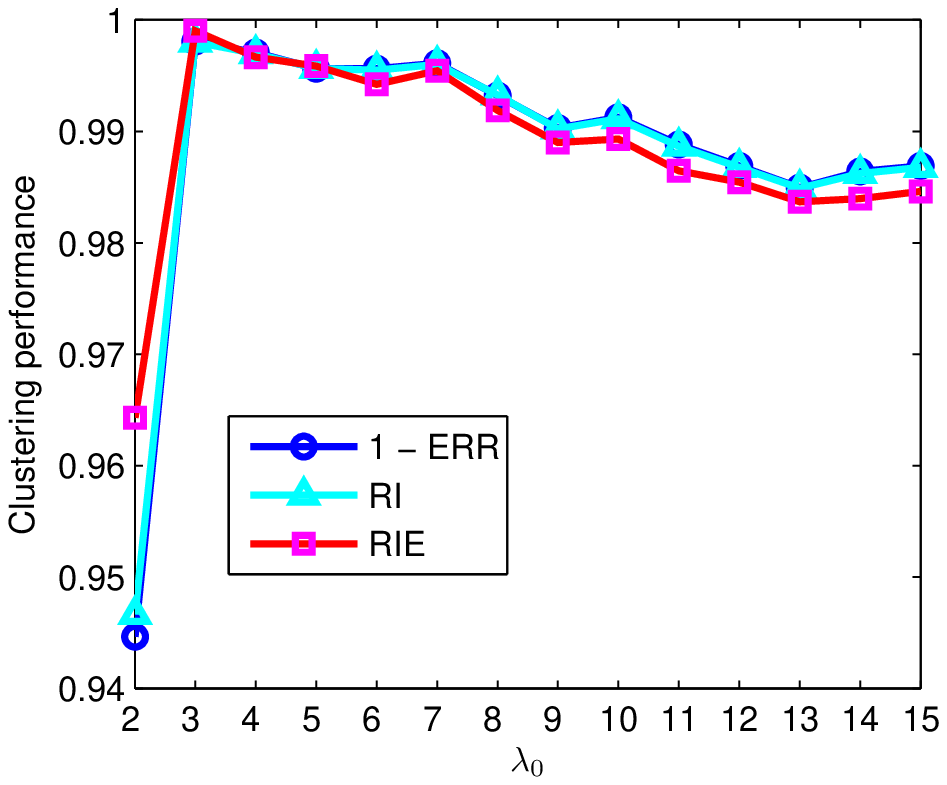}}\\

\subfigure[CSSC ($1\%$)]{\includegraphics[clip=true,trim=0 0 5 0,width=0.45\columnwidth]{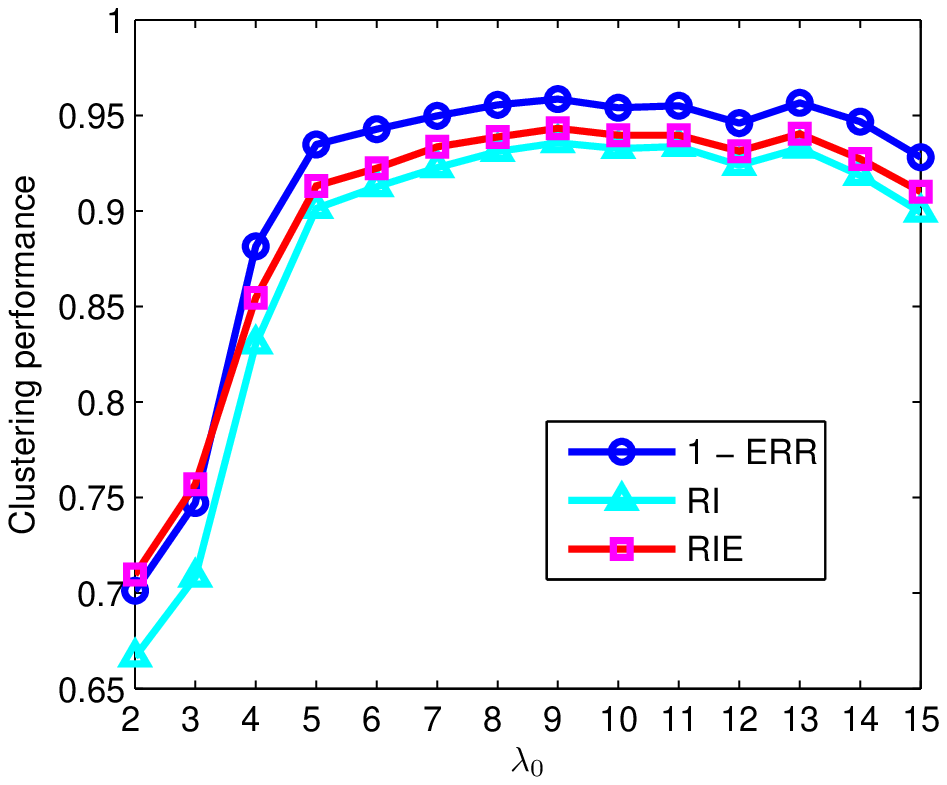}}
\subfigure[CSSC+ ($1\%$)]{\includegraphics[clip=true,trim=0 0 5 0,width=0.45\columnwidth]{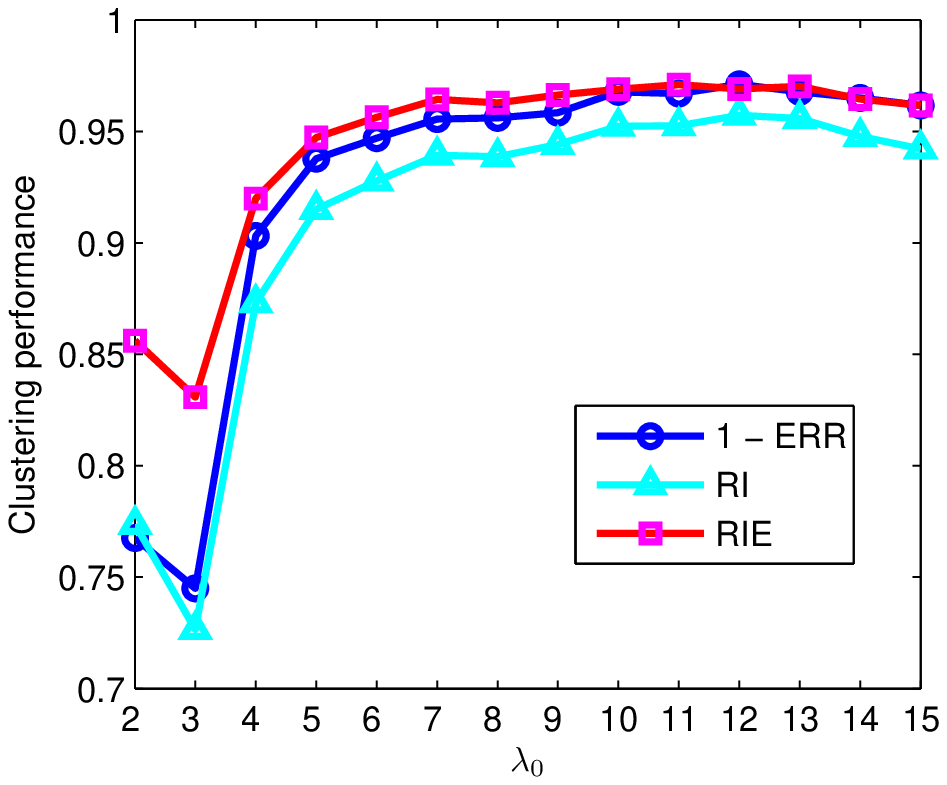}}
\subfigure[CSSC ($5\%$)]{\includegraphics[clip=true,trim=0 0 5 0,width=0.45\columnwidth]{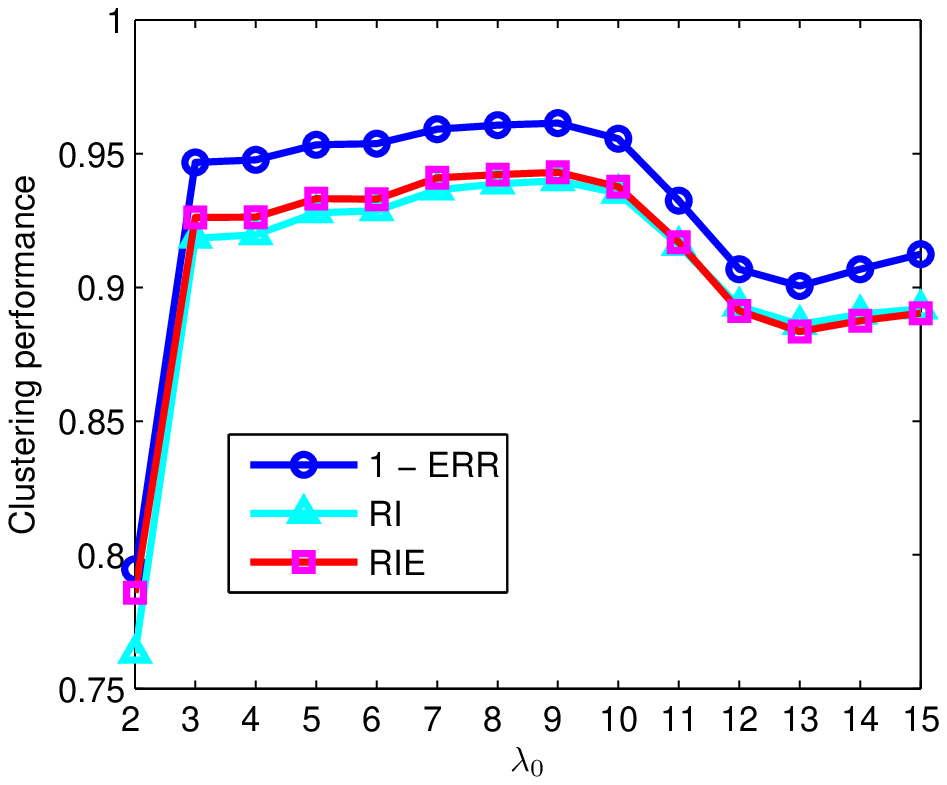}}
\subfigure[CSSC+ ($5\%$)]{\includegraphics[clip=true,trim=0 0 5 0,width=0.45\columnwidth]{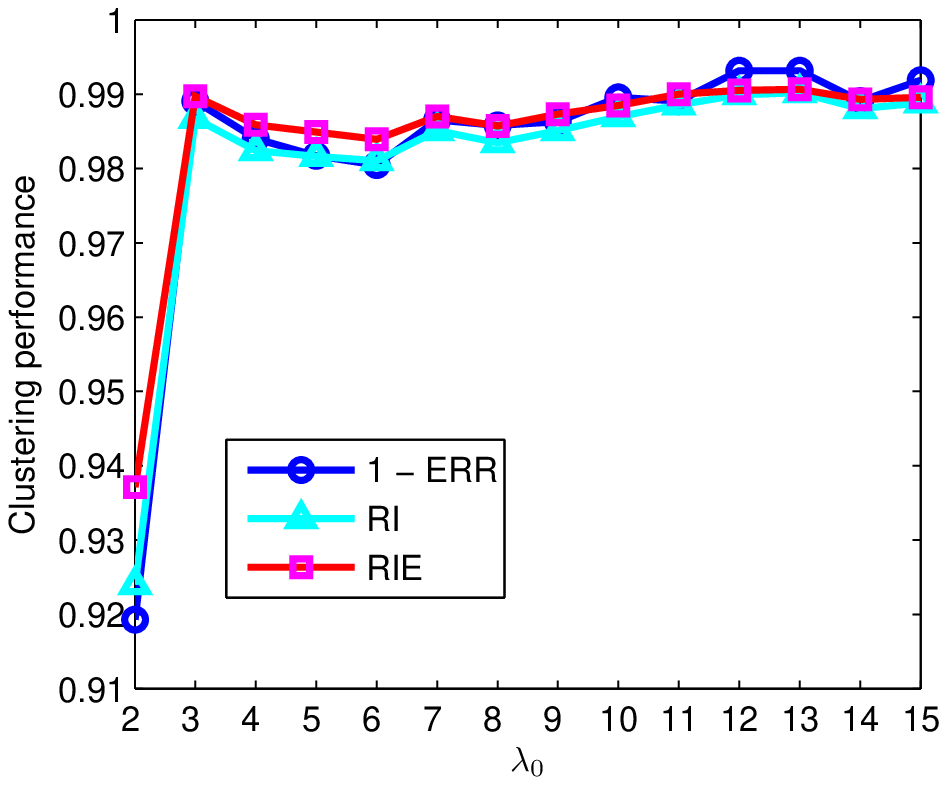}}

\caption{The clustering accuracy (1 - ERR), the Rand index (RI), and the Rand index estimator (RIE) on each dataset: St. Jude leukemia (top row), Lung Cancer (middle row) and Novartis BPLC (bottom row), where the percentage in bracket is the proportion of the available side-information.}
\label{fig:ERR-vs-EST-Novertis-p5-p1-CS3C-CS2Cplus}
\end{figure*}

\subsection{Parameter Selection via Rand Index Estimator}
\label{sec:experiments-parameter-selection}

To demonstrate the feasibility of using the Rand index estimator to guide parameters selection, we conduct experiments on dataset Novartis BPLC with \cs3c by varying parameters $\lambda$ and $\alpha$, where
$\alpha \in \{0.05, 0.1, 0.2, \cdots, 0.9, 1.0, 1.2, 1.5, 2.0\}$ and $\lambda$ is set by
\begin{align}
\lambda =  \frac{\lambda_0}{\min_j \max_{i:i \neq j} \{\x_i^\top \x_j\}}
\end{align}
with $\lambda_0 \in  \{2,3,4,5,6,7,8,9,10\}$. The clustering error (ERR) and the Rand index estimator (RIE) under all combination of parameters are recorded.
We show ERR and RIE in Fig.\ref{fig:ERR-vs-EST-Novertis-p5} panels (a) and (b), respectively, as a function of parameters $\alpha$ and $\lambda_0$. As could be observed that, the change patterns of the two panels are consistently correlated. This confirms the feasibility of using the Rand index estimator to conduct parameters selection in \cs3c.

Moreover, we also conduct experiments for CSSC and CSSC+ on all three datasets. We show the clustering accuracy (1 - ERR), the Rand index (RI), and the Rand index estimator (RIE), respectively, as a function of the parameter $\lambda_0$, in Fig.\ref{fig:ERR-vs-EST-Novertis-p5-p1-CS3C-CS2Cplus}. As could be observed, the positions of the peaks of the Rank index estimator correspond to the best clustering accuracy. Compared to CSSC, the Rand index estimator is more consistent to the true clustering accuracy (in terms of both the Rand index and the clustering error). This confirms that the Rand index estimator is of practical value to conduct parameter selection, especially for CSSC+. 

\section{Conclusion}
\label{sec:conclusion}

We have presented an enhanced 
framework to perform constrained subspace clustering with side-information, in which the constraints in side-information are used not only in the stage of learning the affinity matrix but also in the stage of spectral clustering. Moreover, we have proposed an Rand index estimator based on partial side-information for estimating the clustering accuracy with theoretical guarantee and used it to conduct parameters selection. Experiments on three cancer gene expression datasets have validated the effectiveness of our proposals.

The Rand index estimator is a general measure for estimating the clustering accuracy with partial pairwise side-information, not limited to subspace clustering. More comprehensive evaluations on the performance of constrained subspace clustering with side-information, model selection with the Rand index estimator, and more active way \cite{Li:ICCV15-STSSL} to exploit the partial supervision of side-information will be our future work.

\section*{Acknowledgment}
C.-G. Li is partially supported by the Open Project Fund from Key Laboratory of Machine Perception (MOE) , Peking University. C.-G. Li would like to thank Chong You for his valuable comments, especially on the proof of Theorem~\ref{theorem:mu-hat-mu-bound}.


\small
\bibliographystyle{IEEEtran}
\bibliography{biblio/temp,biblio/cgli,biblio/vidal,biblio/vision,biblio/math,biblio/learning,biblio/sparse,biblio/geometry,biblio/dti,biblio/recognition,biblio/surgery,biblio/coding,biblio/matrixcompletion,biblio/segmentation,biblio/computationalbiology}

\end{document}